\begin{document}
\setlength{\textfloatsep}{8pt}
\setlength{\intextsep}{8pt}
\setlength{\floatsep}{6pt}
\setlength{\abovecaptionskip}{2pt}
\setlength{\belowcaptionskip}{0pt}

\title{Judgelight: Trajectory-Level Post-Optimization for Multi-Agent Path Finding\\ via Closed-Subwalk Collapsing}
\titlerunning{Judgelight: A Post-Optimization Layer for Learning-Based MAPF}
%
\author{Yimin Tang\inst{1} \and
Sven Koenig\inst{1,2} \and
Erdem B{\i}y{\i}k\inst{1}}
\authorrunning{Tang et al.}
%
\institute{Thomas Lord Department of Computer Science, University of Southern California\\
\email{\{yimintan,biyik\}@usc.edu}\\
\and
Department of Computer Science, University of California Irvine\\
\email{svenk@uci.edu}\\
}
\maketitle              
\begin{abstract}
Multi-Agent Path Finding (MAPF) is an NP-hard problem with applications in warehouse automation and multi-robot coordination. Learning-based MAPF solvers offer fast and scalable planning but often produce feasible trajectories that contain unnecessary or oscillatory movements. We propose \textbf{Judgelight}, a post-optimization layer that improves trajectory quality after a MAPF solver generates a feasible schedule. Judgelight collapses closed subwalks in agents' trajectories to remove redundant movements while preserving all feasibility constraints. We formalize this process as \textbf{MAPF-Collapse}, prove that it is NP-hard, and present an exact optimization approach by formulating it as integer linear programming (ILP) problem. Experimental results show Judgelight consistently reduces solution cost by around 20\%, particularly for learning-based solvers, producing trajectories that are better suited for real-world deployment.

\keywords{Multi-Agent Path Finding \and NP-hardness \and Deep Learning}
\end{abstract}
\section{Introduction}

Multi-Agent Path Finding (MAPF) is an NP-hard problem~\cite{stern2019multi,yu2013structure} that plans collision-free trajectories for multiple agents moving from given start locations to distinct goal locations on a known graph while optimizing a cost criterion. MAPF is a fundamental abstraction for coordination in domains such as warehouse automation, transportation, and aerial swarms. Lots of search-based methods are designed to solve this problem, including Conflict-Based Search (CBS)~\cite{sharon2015conflict}, $M^*$~\cite{wagner2011m}, LaCAM~\cite{okumura2023lacam}, and MAPF-LNS2~\cite{li2022mapf}. These methods provide strong correctness guarantees and high-quality solutions, but their runtime increase substantially as the number of agents and the level of congestion grow, motivating complementary approaches that deliver fast decisions at scale.

Driven by the success of deep neural networks in perception, control, and decision making~\cite{krizhevsky2012imagenet,silver2016mastering,achiam2023gpt}, learning-based MAPF solvers have attracted increasing attention~\cite{alkazzi2024comprehensive}. Some learning-based methods, such as RAILGUN~\cite{tang2025railgununifiedconvolutionalpolicy}, adopt a \emph{centralized} formulation that uses global information as neural network input features and outputs the actions for all agents simultaneously. Others follow a \emph{decentralized} formulation: each agent maps a local observation (often a limited field-of-view, FOV) to a single-step action, and all agents act either synchronously or in a fixed order. Representative methods include PRIMAL~\cite{sartoretti2019primal}, MAPPER~\cite{liu2020mapper}, MAGAT~\cite{li2021message}, SCRIMP~\cite{wang2023scrimp}, MAPF-GPT~\cite{andreychuk2024mapf}, and SILLM~\cite{jiang2025deployingthousandrobotsscalable}. These approaches typically rely on imitation learning (IL) and/or reinforcement learning (RL).

Learned policies face two persistent challenges that limit their direct use as drop-in MAPF solvers. First, MAPF is safety-critical: even a single invalid action might cause a collision or a deadlock. In practice, learned policies are commonly wrapped by lightweight validity checks, and invalid moves are often patched by forcing one or more agents to wait. Recent work such as CS-PIBT~\cite{veerapaneni2024improvinglearntlocalmapf} helps learning-based methods avoid producing collision-inducing next actions by using neural network output probabilities to guide the search order in the search-based algorithm PIBT~\cite{okumura2022priority}, thereby effectively increasing success rates under congestion. While these corrections are useful as they may prevent collisions, they only focus on feasibility at the \emph{next-action} level. When the trajectories are already feasible, their post-processing do not improve the solution quality.

Second, even when a learned solver outputs feasible trajectories, the resulting motion may exhibit \emph{unnecessary movements} (e.g., back-and-forth movements)~\cite{veerapaneni2024improvinglearntlocalmapf} that are undesirable in real deployments. In warehouse settings, such redundant motion increases energy consumption and mechanical wear, and may elevate operational risk by introducing avoidable interactions with other robots and environmental uncertainties. Importantly, this inefficiency might persist even when no nearby agents constrain the motion~\cite{veerapaneni2024improvinglearntlocalmapf}, indicating that trajectory quality issues are not solely attributable to collision resolution.

In this work, we propose \textbf{Judgelight}, a post-optimization layer that improves trajectory quality \emph{after} a MAPF solver (learning-based or search-based) produces a feasible schedule. Judgelight targets redundant motion by allowing the solver to \emph{collapse closed subwalks} in individual agents' trajectories while maintaining all MAPF feasibility constraints. We formalize this post-processing task as a new optimization problem, \textbf{MAPF-Collapse}, whose objective is to minimize the total number of move actions (treating waits as low-cost) subject to feasibility. This formulation captures the deployment-motivated preference that an agent should stay in place unless movement is necessary. Our main contributions are:
\begin{enumerate}
    \item We introduce \textbf{MAPF-Collapse}, a trajectory-level post-optimization problem that takes a feasible MAPF schedule as input and reduces redundant movements through collapse operations.
    \item We analyze the computational complexity of MAPF-Collapse and prove that it is \textbf{NP-hard}.
    \item We develop an exact optimization approach, \textbf{Judgelight}, for MAPF-Collapse and demonstrate that it consistently improves solution quality, particularly for learning-based MAPF solvers, producing trajectories that are more suitable for real-world warehouse-style operations.
\end{enumerate}

\section{Background: MAPF Problem Definition}
The MAPF problem is defined as follows: Let $I=\{1,2,\cdots,N\}$ denote a set of $N$ agents. $G = (V,E)$ represents an undirected graph, where each vertex $v \in V$ represents a possible location of an agent in the workspace, and each edge $e \in E$ is a unit-cost edge between two vertices that moves an agent from one vertex to the other. Self-loop edges are also allowed, which represent ``wait-in-place'' actions.
Each agent $i\in I$ has a start location $s_i \in V$ and a goal location $g_i \in V$. It also holds that $s_i \neq s_j$ and $g_i \neq g_j$ when $i \neq j$ $\forall i, j \in I$.
The task is to plan a collision-free path for each agent $i$ from $s_i$ to $g_i$.

Each action of agents, either waiting in place or moving to an adjacent vertex, takes one time unit.
Let $v^i_t \in V$ be the location of agent $i$ at timestep $t$.
Let $\pi_i=[v_0^{i}, v_1^{i}, ..., v_{T^{i}}^{i}]$ denote a path of agent $i$ from its start location $v_0^{i}$ to its target $v_{T^{i}}^{i}$. 
We assume that agents rest at their targets after completing their paths, i.e., $v_t^i = v_{T^i}^i, \forall t > T^i$.
We refer to the path with the minimum cost as the shortest path.

We consider two types of agent-agent collisions.
The first type is \emph{vertex collision}, where two agents $i$ and $j$ occupy the same vertex at the same timestep. 
The second type is \emph{edge collision}, where two agents move in opposite directions along the same edge simultaneously.
We use $(i, j, t)$ to denote a vertex collision between agents $i$ and $j$ at timestep $t$ or an edge collision between agents $i$ and $j$ at timestep $t$ to $t+1$.
MAPF algorithms usually use \emph{SoC (flowtime)} $\sum_{i=1}^{N}T^{i}$ as the cost function.

Overall, we need to find a set of paths $\{\pi_i \mid i\in I\}$ for all agents such that, for each agent $i$:
\begin{enumerate}
\item Agent $i$ starts from its start location (i.e., $v_0^{i} = s_i$) and stops at its target location $g_j$ (i.e., $v_{t}^{i} = g_j, \forall t \ge T^{i}$).
\item Every pair of adjacent vertices on path $\pi_i$ is connected by an edge, i.e., $(v_{t}^{i}, v_{t+1}^{i}) \in E, \forall t \in \{0,1,\dots,T^i\}$.
\item $\{\pi_i \mid i\in I\}$ is collision-free.
\end{enumerate}

\section{Related Work}

\subsection{Multi-Agent Path Finding (MAPF)}

MAPF has been proved an NP-hard problem when optimality is required~\cite{yu2013structure}. It has inspired a wide range of solutions for its related challenges. 
Decoupled strategies, as outlined in \cite{silver2005cooperative,wang2008fast,luna2011push}, approach the problem by independently planning paths for each agent before integrating these paths. 
In contrast, coupled approaches \cite{standley2010finding,standley2011complete} devise a unified plan for all agents simultaneously. There also exist dynamically coupled methods~\cite{sharon2015conflict,wagner2015subdimensional} that consider agents planning independently at first and then together only when needed for resolving agent-agent collisions. 
Among these, Conflict-Based Search (CBS) algorithm \cite{sharon2015conflict} stands out as a centralized and optimal method for MAPF, with several bounded-suboptimal variants such as ECBS~\cite{barer2014suboptimal} and EECBS~\cite{li2021eecbs}. Some suboptimal MAPF algorithms, such as Prioritized Planning (PP)~\cite{erdmann1987multiple,silver2005cooperative}, PBS~\cite{ma2019searching}, LaCAM~\cite{okumura2023lacam} and their variant methods~\cite{chan2023greedy,li2022mapf,okumura2023lacam3} exhibit better scalability and efficiency. However, these search-based algorithms always face the problem of search space dimensionality explosion as the problem size increases, making it difficult to produce a feasible solution within a limited time. Learning-based methods, which we discuss in the subsequent section, attempt to overcome the dimensionality issue by learning from large amounts of data or experience.

\subsection{Learning-based MAPF}

Given the success of deep learning, many learning-based MAPF methods have been proposed. Compared to search-based algorithms, these methods can typically produce plans in a short time and automatically learn heuristic functions. Most approaches use IL, RL, or a combination of both. Learning-based solvers can make end-to-end decisions using all available information and can be trained on various data types (e.g., MAPF and LMAPF). In contrast, search-based methods often require multi-stage decompositions with hand-crafted heuristics.

One early learning-based method for MAPF is PRIMAL~\cite{sartoretti2019primal}, which is trained using both RL and IL. It is a decentralized algorithm that relies on a local field of view (FOV) around each agent to generate actions. MAPF-GPT~\cite{andreychuk2024mapf} is a GPT-based model for MAPF, trained via IL on a large dataset. Other approaches incorporate communication mechanisms in a decentralized manner, such as GNN~\cite{li2020graph} and MAGAT~\cite{li2021message}, which employ graph neural networks for inter-agent communication, and SCRIMP~\cite{wang2023scrimp}, which uses a global communication mechanism based on transformers. SILLM~\cite{jiang2025deployingthousandrobotsscalable} applies IL and achieves higher planning speed and throughput than search-based methods in certain LMAPF scenarios. RAILGUN~\cite{tang2025railgununifiedconvolutionalpolicy} adopts a centralized IL formulation and outputs actions for all agents simultaneously. DeepFleet~\cite{agaskar2025deepfleetmultiagentfoundationmodels} is trained on real warehouse data and improves the efficiency of Amazon warehouse operations by 10\%. Recent work such as CS-PIBT~\cite{veerapaneni2024improvinglearntlocalmapf} improves learned local policies by combining them with PIBT~\cite{okumura2022priority} at the action-distribution level, effectively increasing success rates under congestion. However, these corrections primarily operate at the \emph{next-action} level and focus on collision avoidance; they cannot detect whether an agent is actually making progress toward its goal or merely exhibiting oscillatory movement.

\subsection{Lifelong MAPF}

A generalization of the MAPF problem, Lifelong MAPF (LMAPF) continuously assigns new target locations to agents once they have reached their current targets. In LMAPF, agents do not need to arrive at their targets simultaneously. There are three main approaches to solving LMAPF: solving the problem as a whole~\cite{nguyen2019generalized}, using MAPF methods but replanning all paths at each specified timestep~\cite{li2021lifelong,okumura2022priority}, and replanning only when agents reach their current targets and are assigned new ones~\cite{vcap2015complete,grenouilleau2019multi}. Recently, PIBT~\cite{okumura2022priority} has played a key role in LMAPF. Its core idea is that higher-priority agents can push lower-priority agents to alternative locations, enabling progress toward their next targets and ensuring completeness. Similar to standard MAPF, LMAPF solutions may also exhibit unnecessary or oscillatory movements, and our method \textbf{Judgelight} can be naturally extended to this setting.


\section{MAPF-Collapse}

In this section, we first formulate the MAPF post-optimization problem, which we call \textbf{MAPF-Collapse}. We then prove that MAPF-Collapse is NP-hard.

We define the Multi-Agent Path Finding collapse (MAPF-Collapse) problem as follows.
Let $I=\{1,2,\cdots,N\}$ denote a set of $N$ agents.
Let $G=(V,E)$ be an undirected graph, where each vertex $v\in V$ represents a possible location of an agent, and each edge $e\in E$ is a unit-cost edge between two vertices.
As in standard MAPF, we allow self-loop edges, which represent \emph{wait-in-place} actions.

Each agent $i\in I$ is associated with a discrete-time trajectory over a fixed time horizon $T\in\mathbb{N}$.
The input is given as a matrix \(M \in V^{N\times (T+1)}\), 
where $M[i,t]\in V$ denotes the location of agent $i$ at timestep $t$, for
$t=0,1,\ldots,T$.
We write $v_t^i = M[i,t]$ for notational consistency with MAPF. We assume the given matrix $M$ satisfies all constraints in the MAPF problem definition.

\paragraph{Allowed Modification (Collapse): }
For any agent $i\in I$ and any indices $0 \le a < b \le T$ such that \( v_a^i = v_b^i \), 
we may replace the subsequence: 
\[
[v_a^i, v_{a+1}^i, \ldots, v_b^i]
\]
by the constant sequence \([x, x, \ldots, x], \text{where } x = v_a^i = v_b^i\). This operation is called a \emph{collapse of a closed subwalk}.
Multiple collapses may be applied to different agents independently.

Let $M'$ denote the modified schedule after applying some collapses, and
let $v_t^{\prime i} := M'[i,t]$. The modified schedule $M'$ must satisfy the same constraints that mentioned in MAPF problem definition: Each agent finally stops at target locations, each pair of adjacent vertices must be connected and the path solution is collision-free.

\paragraph{Cost Function: }
Each action (waiting or moving) takes one time unit.
Traversing an edge incurs unit cost, while waiting has lower cost. In this paper, we assume it to be zero.
The total cost of a modified schedule $M'$ is defined as
\[
\mathrm{cost}(M') =
\sum_{i=1}^{N}
\left|
\left\{
t \in \{0,\ldots,T-1\}
\;\middle|\;
v_t^{\prime i} \neq v_{t+1}^{\prime i}
\right\}
\right|.
\]

The objective of the MAPF-Collapse problem is to find a set of collapses applied to the input schedule $M$ such that the resulting schedule $M'$ is feasible and minimizes $\mathrm{cost}(M')$. Although MAPF-Collapse is defined based on the standard MAPF formulation, it can be readily extended to other MAPF variants, such as Task and Path Finding (TAPF)~\cite{ma2016optimal,tang2023solving,tang2024ita} and LMAPF~\cite{li2021lifelong,okumura2022priority}. For example, given an LMAPF solution, one can remove all collapse actions associated with completed goals at their corresponding timesteps and then apply the same MAPF-Collapse formulation to optimize the remaining trajectories.

\subsection{MAPF-Collapse NP-hardness}
\label{subsec:collapsed_mapf_nphard}

We prove that \textsc{MAPF-Collapse} is NP-hard via its decision version. Decision MAPF-Collapse $(G, M, \beta)$: Given a graph $G=(V,E)$, a horizon $T\in\mathbb{N}$, a feasible input schedule
$M\in V^{N\times (T+1)}$, and a bound $\beta\in\mathbb{N}$, decide whether there exists
a sequence of collapse operations applied to $M$ that produces a feasible schedule $M'$
such that $\mathrm{cost}(M')\le \beta$.

\begin{lemma}\label{lem:collapsed_in_np}
\textsc{Decision MAPF-Collapse} is in NP.
\end{lemma}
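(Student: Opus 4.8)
The plan is the standard recipe for NP membership: exhibit a polynomial-size certificate together with a polynomial-time verifier. The certificate I would use is a witnessing sequence of collapse operations $\sigma = \langle (i_1,a_1,b_1),\dots,(i_k,a_k,b_k)\rangle$, where the triple $(i_j,a_j,b_j)$ stands for ``collapse the closed subwalk of agent $i_j$ between timesteps $a_j$ and $b_j$''. The verifier replays $\sigma$ on the input matrix $M$, then checks that the resulting schedule $M'$ satisfies the MAPF feasibility constraints and that $\mathrm{cost}(M')\le\beta$. The only point that needs an argument is that $\sigma$ can always be chosen of polynomially bounded length; once that is in place, everything else is a routine simulation.

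For the length bound, call a collapse \emph{trivial} if it is applied to a segment that is already constant, so that it changes nothing; trivial collapses can be deleted from $\sigma$ without altering the resulting schedule, so we may assume $\sigma$ contains none. Fix an agent $i$ and track the quantity $w_i$, the number of timesteps $t\in\{0,\dots,T-1\}$ with $v_t^{i}=v_{t+1}^{i}$ in the current schedule (the ``wait'' transitions of agent $i$). A nontrivial collapse of agent $i$ on $[a,b]$ turns transitions $a,a+1,\dots,b-1$ into waits, while every transition outside $\{a,\dots,b-1\}$ is unchanged: the boundary pairs $(v_{a-1}^{i},v_a^{i})$ and $(v_b^{i},v_{b+1}^{i})$ are preserved because the collapsed segment still begins and ends at $v_a^{i}=v_b^{i}$. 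Since the collapse is nontrivial, at least one of the transitions $a,\dots,b-1$ was previously a move, so $w_i$ strictly increases. As $0\le w_i\le T$ at all times, agent $i$ can undergo at most $T$ nontrivial collapses, hence $k\le NT$ and $\sigma$ has size $O(NT\log(NT))$, polynomial in the size of $(G,M,\beta)$.

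The verifier then works as follows. Initialize $M'\leftarrow M$ and process the triples of $\sigma$ in order: for $(i_j,a_j,b_j)$, check $0\le a_j<b_j\le T$ and $M'[i_j,a_j]=M'[i_j,b_j]$, and if so overwrite $M'[i_j,t]\leftarrow M'[i_j,a_j]$ for all $a_j\le t\le b_j$; this costs $O(kT)=O(NT^2)$. It then checks that $M'$ is a feasible MAPF schedule --- the start/goal boundary conditions, the edge constraints $(M'[i,t],M'[i,t+1])\in E$ for all $i,t$ (self-loops account for waits), and the absence of vertex and edge collisions --- in $O(N^2 T)$ time, and finally computes $\mathrm{cost}(M')=\sum_{i}\bigl|\{t:M'[i,t]\neq M'[i,t+1]\}\bigr|$ and accepts iff it is at most $\beta$, in $O(NT)$ time. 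If some feasible $M'$ with $\mathrm{cost}(M')\le\beta$ is reachable from $M$ by collapses, then (after deleting trivial operations) that sequence is an accepting certificate; conversely, every accepted certificate yields such an $M'$. Every step is polynomial in the input size, so \textsc{Decision MAPF-Collapse}$\,\in\,$NP.

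The step I expect to be the main obstacle is the length bound: a priori one might worry that a solver must apply collapses ``on top of'' earlier collapses in a way that forces an exponentially long witness. The wait-count monovariant rules this out cleanly, and it is insensitive to the fact that later collapses act on already-modified trajectories --- all that matters is that each nontrivial collapse strictly increases a quantity bounded by $T$.
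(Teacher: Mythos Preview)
Your proof is correct. The paper takes a shorter route: it uses the resulting schedule $M'$ itself as the certificate (size $O(NT)$) and has the verifier check feasibility and compare the cost with $\beta$. You instead use the sequence of collapse operations as the certificate and supply a monovariant argument (each nontrivial collapse strictly increases the agent's wait count, which is bounded by $T$) to show the sequence has length at most $NT$.

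What the two choices buy: the paper's certificate is smaller and the verifier is simpler, but the paper's write-up does not explicitly say how the verifier confirms that $M'$ is actually reachable from $M$ by collapse operations --- this is checkable, but it is not entirely immediate. Your approach sidesteps that issue completely, since replaying $\sigma$ both produces $M'$ and certifies reachability by construction; the price is the extra monovariant argument, which you handle cleanly. Either route establishes NP membership.
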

\begin{proof}
A certificate can be the resulting schedule $M'$ (of size $O(NT)$).
We verify feasibility by checking (i) connectivity for every agent and timestep,
(ii) absence of vertex collisions, and (iii) absence of edge swaps, all in $O(NT)$ time.
We compute $\mathrm{cost}(M')$
in $O(NT)$ time and compare it with $\beta$.
Thus verification runs in polynomial time.
\end{proof}

\begin{theorem}\label{thm:collapsed_npcomplete}
\textsc{Decision MAPF-Collapse} is NP-complete. \textsc{MAPF-Collapse}
(the optimization version) is NP-hard.
\end{theorem}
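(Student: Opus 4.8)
Since \textsc{Decision MAPF-Collapse} lies in NP by Lemma~\ref{lem:collapsed_in_np}, it suffices to prove it NP-hard; the optimization version is then NP-hard as well, because a polynomial-time algorithm for it would decide the decision version by comparing the optimum with $\beta$. The plan is a polynomial-time reduction from \textsc{3-SAT}. Given a CNF formula $\varphi$ on variables $x_1,\dots,x_n$ with clauses $C_1,\dots,C_m$, I will construct a graph $G$, a horizon $T$, a feasible input schedule $M$ with $O(n+m)$ agents, and a bound $\beta$, so that $\varphi$ is satisfiable iff $M$ admits a sequence of collapses producing a feasible $M'$ with $\mathrm{cost}(M')\le\beta$.

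Two elementary facts about collapses guide the gadget design. First, since $M$ is feasible, any collision appearing in a collapsed schedule must involve an agent that is now held at a vertex over a strictly longer index interval than in $M$; collapsing is the only source of new collisions. Second, if a single agent has two collapsible closed subwalks whose index intervals \emph{properly} overlap (intersecting, with neither contained in the other) and whose base vertices differ, then \emph{at most one} of them can ever be collapsed, because collapsing either one overwrites an endpoint of the other and destroys its closedness. I use this to build a \emph{variable gadget} for each $x_j$: a single agent $X_j$ whose trajectory is a simple walk through private vertices except for two equal-length simple-cycle loops $L_j^{+}$ and $L_j^{-}$ with properly overlapping intervals and distinct base vertices $u_j^{+},u_j^{-}$. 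Collapsing $L_j^{+}$ will encode $x_j=\textsf{true}$ and collapsing $L_j^{-}$ will encode $x_j=\textsf{false}$; the cycle of $L_j^{+}$ is routed, in the part of its interval \emph{not} overlapped by $L_j^{-}$, through a private vertex $p_{j,k}$ at a prescribed timestep for every clause $C_k$ containing the literal $x_j$, and symmetrically $L_j^{-}$ visits $n_{j,k}$ for clauses containing $\lnot x_j$. For each clause $C_k$ I build a \emph{clause gadget}: a single agent $B_k$ with three equal-length simple-cycle loops whose intervals pairwise properly overlap --- so at most one can be collapsed --- and whose base vertices are exactly the three literal vertices ($p_{j,k}$ or $n_{j,k}$) of $C_k$. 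Every cycle otherwise uses fresh private vertices, so gadgets interact only through the shared vertices $p_{j,k},n_{j,k}$; I time the windows so that $B_k$ can collapse the loop of a literal $\ell$ precisely when the owning variable agent has vacated that base vertex, i.e.\ precisely when the current collapse choices make $\ell$ true.

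With $\sigma$ the number of moves saved by collapsing any variable loop (made uniform by padding the cycles with private vertices), $\tau$ the analogous quantity for clause loops, and $W$ the total number of moves in $M$, set $\beta=W-n\sigma-m\tau$. The correctness argument is then: (i) in any collapsed schedule the variable loops contribute savings of at most $n\sigma$ (at most one collapse per variable) and the clause loops at most $m\tau$ (at most one per clause), and by construction no further collapse ever reduces an agent's move count, so $\mathrm{cost}(M')\le\beta$ forces \emph{exactly} one variable loop collapsed per variable --- which defines a truth assignment $\phi$ --- and exactly one clause loop collapsed per clause, and feasibility of that clause collapse forces the collapsed loop to belong to a literal made true by $\phi$; hence $\phi\models\varphi$; (ii) conversely, from $\phi\models\varphi$ one collapses $L_j^{\phi(j)}$ for every $j$ and, for each clause, one loop of a literal true under $\phi$, and checks that this family of collapses is mutually feasible and attains cost exactly $\beta$.

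I expect the main obstacle to be the explicit layout of $M$ on a common time axis so that all of the following hold simultaneously: $M$ is collision-free, which forces each shared vertex $p_{j,k}$ to be occupied by $X_j$ only \emph{strictly inside} the matching loop window of $B_k$ (so the occupancy blocks that collapse yet never collides with $B_k$'s own two visits to the base vertex); the $p$-visits of $L_j^{+}$ and the $n$-visits of $L_j^{-}$ lie in the non-overlapping portions of the two intervals, so that collapsing one of $L_j^{\pm}$ does not inadvertently erase the visits belonging to the other; each agent's trajectory is built so that its \emph{only} non-trivial closed subwalks are the designated loops and collapsing any of them leaves a walk admitting no further cost-reducing collapse (both achieved by making every excursion a simple cycle through otherwise-private vertices); and the arithmetic is exact, i.e.\ $\sigma$ and $\tau$ are literally constant across loops. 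These are bookkeeping constraints rather than conceptual ones, and since each agent needs only $O(\mathrm{poly}(n,m))$ timesteps and $O(\mathrm{poly}(n,m))$ private vertices, the whole construction is polynomial, which completes the reduction once these details are discharged.
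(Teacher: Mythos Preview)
Your reduction from \textsc{3-SAT} is sound and rests on the same two primitives the paper's proof uses: properly overlapping closed subwalks on a single agent with distinct base vertices are mutually exclusive, and parking one agent at a shared vertex via a collapse blocks (or is enabled by) a collapse on another agent. The paper, however, reduces from \textsc{Independent Set}, which buys a much leaner construction: each input vertex $u$ becomes a vertex-agent $A_u$ with a single two-move closed walk $x_u\to p_u\to x_u$, and each input edge $e=(u,v)$ becomes an edge-agent $B_e$ whose seven-step block $a_e,x_u,b_e,c_e,a_e,x_v,b_e$ admits exactly two overlapping four-move collapses, one retaining the visit to $x_u$ and one to $x_v$; collapsing $A_u$ forces $B_e$ to take the option that avoids $x_u$, so collapsing both endpoints of an edge leaves $B_e$ with no feasible collapse, and independence falls out directly from the budget $\beta=C_0-4m-2K$. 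Because the edge-agents run in disjoint seven-step time blocks, the paper has essentially no cross-gadget timing to arrange, whereas your clause windows must simultaneously intersect the early phase of some variable agents and the late phase of others; a formula containing both $(x_1\lor\lnot x_2\lor\cdots)$ and $(x_2\lor\lnot x_1\lor\cdots)$ already shows that short sequential clause blocks cannot work. This is not a fatal gap---taking every clause's three windows to be, say, $[0,T{-}2],[1,T{-}1],[2,T]$ keeps them pairwise properly overlapping while making every intersection constraint trivial---but you should state that choice explicitly rather than leaving it under ``bookkeeping.'' With that made concrete your argument goes through; the paper's route is simply shorter and avoids the synchronization issue altogether.
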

\begin{proof}
We reduce from \textsc{Independent Set}: given an undirected graph $H=(U,F)$ and
an integer $K$, decide whether there exists an independent set $S\subseteq U$ with
$|S|\ge K$.

\medskip
\noindent\textbf{Construction}

Let $m:=|F|$ and fix an indexing $F=\{e_r \mid r=1,\dots,m\}$.
We construct an instance $(G,M,\beta)$ of \textsc{Decision MAPF-Collapse} with
$N:=|U|+|F|$ agents.

\smallskip
\noindent\emph{Vertices.}
For each $u\in U$ in \textsc{Independent Set}, create two vertices $x_u$ and $p_u$ in $G$.
For each edge $e\in F$, create three private vertices $a_e,b_e,c_e$ in $G$.
All created vertices are distinct.

\smallskip
\noindent\emph{Edges.}
Add $\{x_u,p_u\}$ to $G$ for all $u\in U$.
For each edge $e=(u,v)\in F$, add the following undirected edges to $G$
(waiting is allowed by staying at the same vertex between consecutive timesteps, as in the problem definition):
\[
\{a_e,x_u\},\ \{x_u,b_e\},\ \{b_e,c_e\},\ \{c_e,a_e\},\ \{a_e,x_v\},\ \{x_v,b_e\}.
\]

\smallskip
\noindent\emph{Agents and schedule $M$.}
We create one \emph{vertex-agent} $A_u$ for each $u\in U$ and one \emph{edge-agent}
$B_{e_r}$ for each $e_r\in F$, so the total number of agents is $N:=|U|+|F|$. Set the horizon (we subtract one because timesteps start from zero): $T = 7m-1$.

\begin{itemize}
\item For each vertex-agent $A_u$:
\[
M[A_u,0]=x_u,\quad M[A_u,t]=p_u\ (1\le t\le T-1),\quad M[A_u,T]=x_u.
\]
\item For each edge-agent $B_{e_r}$, define a block start time $\Theta_r := 7r-7$.
Outside its block, $B_{e_r}$ waits at a fixed private vertex: it stays at $a_{e_r}$
for all $t\leq\Theta_r$, and stays at $b_{e_r}$ for all $t \geq \Theta_r+6$.
Within the block, set
\[
\bigl(M[B_{e_r},\Theta_r],\dots,M[B_{e_r},\Theta_r+6]\bigr)
=
\bigl(a_{e_r},\ x_{u_r},\ b_{e_r},\ c_{e_r},\ a_{e_r},\ x_{v_r},\ b_{e_r}\bigr).
\]
\end{itemize}

\begin{figure*}[t!]
\centering
\includegraphics[width=0.7\textwidth]{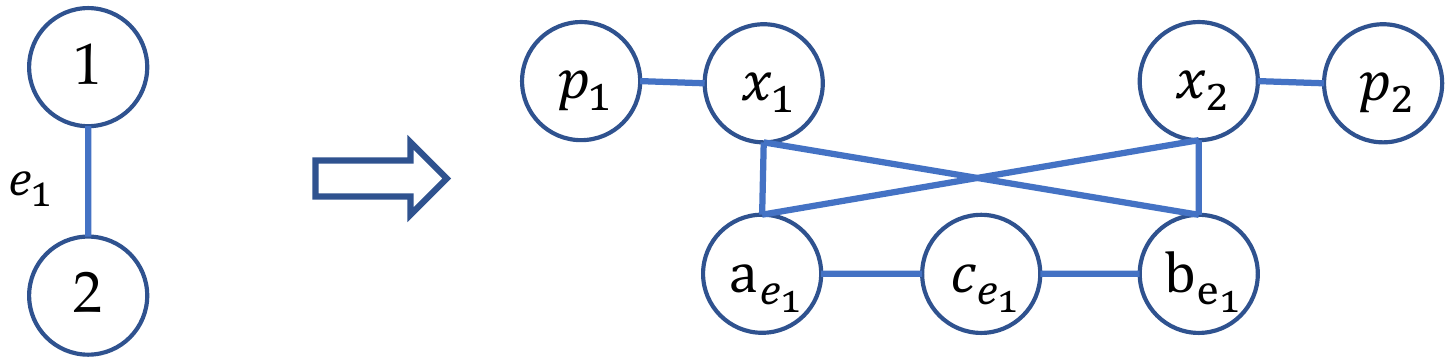}
\caption{
We create three agents $A_1$, $A_2$, and $B_{e_1}$.
The given paths are
$A_1$: \{$x_1$, $p_1$, $p_1$, $p_1$, $p_1$, $p_1$, $x_1$\},
$A_2$: \{$x_2$, $p_2$, $p_2$, $p_2$, $p_2$, $p_2$, $x_2$\}, and
$B_{e_1}$: \{$a_{e_1}$, $x_1$, $b_{e_1}$, $c_{e_1}$, $a_{e_1}$, $x_2$, $b_{e_1}$\}.
In this example, the optimal collapsed schedule collapses either $\{A_1, B_{e_1}\}$ or
$\{A_2, B_{e_1}\}$.
}
\label{fig:2points}
\end{figure*}

In Figures~\ref{fig:2points} and \ref{fig:3points}, we present two small examples to provide intuition. These examples illustrate how we transform a graph with two or three vertices from the \textsc{Independent Set} problem into an instance of \textsc{MAPF-Collapse}.

\smallskip
\noindent\emph{Feasibility of $M$.}
Because (i) the blocks $\{\Theta_r,\dots,\Theta_r+6\}$ are disjoint across $r$,
(ii) all vertices $a_e,b_e,c_e$ are private to their corresponding edge-agent, and
(iii) each vertex-agent $A_u$ stays at the private vertex $p_u$ for times $1,\dots,T-1$,
the input schedule $M$ is feasible.

\begin{figure*}[t!]
\centering
\includegraphics[width=0.8\textwidth]{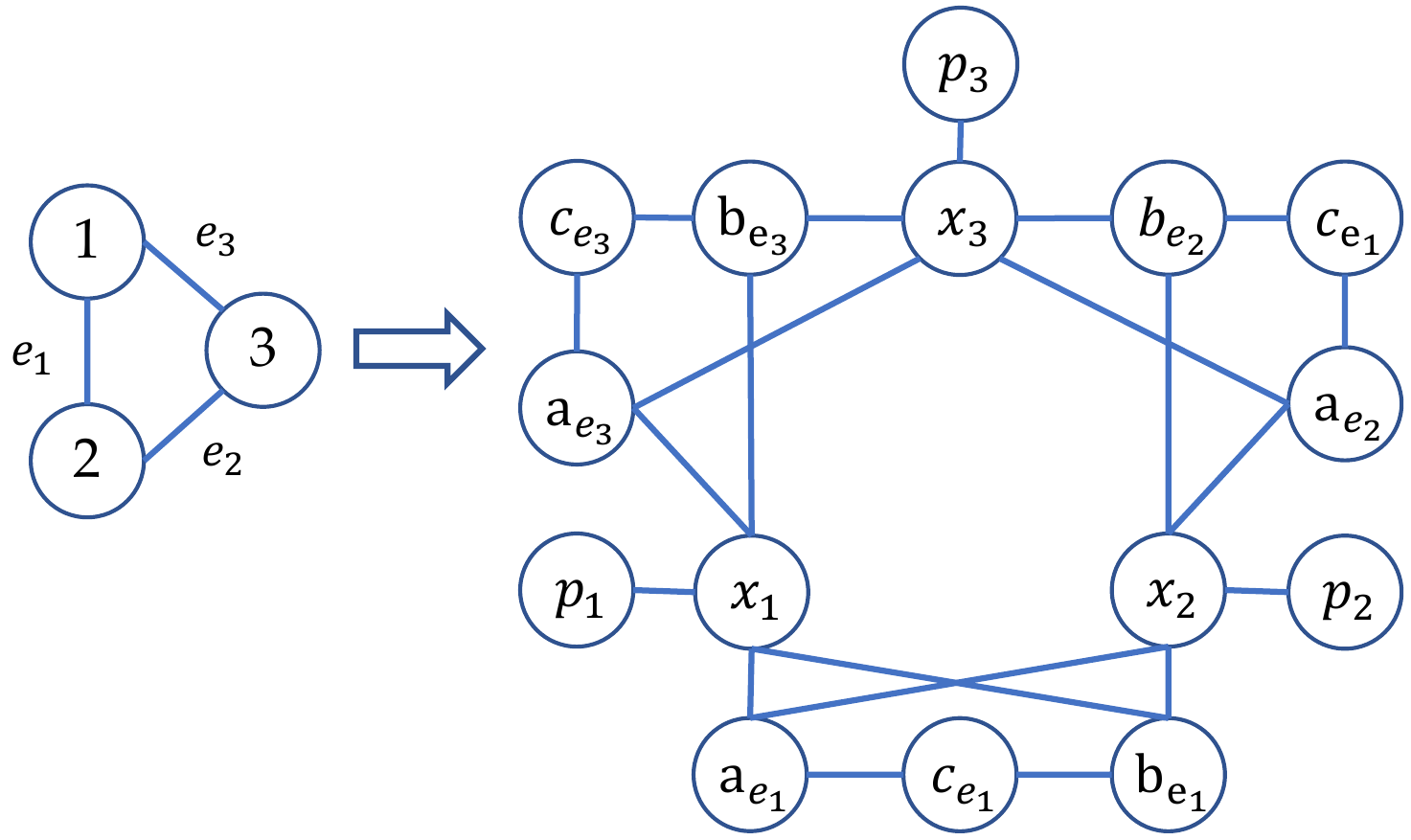}
\caption{
We create six agents $A_1$, $A_2$, $A_3$, $B_{e_1}$, $B_{e_2}$, and $B_{e_3}$.
The given paths are
$A_1$: \{$x_1$, $p_1$, \ldots, $p_1$, $x_1$\},
$A_2$: \{$x_2$, $p_2$, \ldots, $p_2$, $x_2$\},
$A_3$: \{$x_3$, $p_3$, \ldots, $p_3$, $x_3$\},
$B_{e_1}$: \{$a_{e_1}$, $x_1$, $b_{e_1}$, $c_{e_1}$, $a_{e_1}$, $x_2$, $b_{e_1}$, \ldots, $b_{e_1}$\},
$B_{e_2}$: \{$a_{e_2}$, \ldots, $a_{e_2}$, $x_2$, $b_{e_2}$, $c_{e_2}$, $a_{e_2}$, $x_3$, $b_{e_2}$, \ldots, $b_{e_2}$\}, and
$B_{e_3}$: \{$a_{e_3}$, \ldots, $a_{e_3}$, $x_1$, $b_{e_3}$, $c_{e_3}$, $a_{e_3}$, $x_3$, $b_{e_3}$\}.
All agents' paths have length 21. In this example, we can collapse at most one vertex-agent $A_i$.
However, regardless of which $A_i$ is collapsed, each edge-agent $B_{e_r}$ can always be collapsed once.
The only difference is that the choice of $A_i$ determines which collapse option is feasible for the incident edge-agents.
}
\label{fig:3points}
\end{figure*}

\smallskip
\noindent\emph{Collapses and cost savings.}
For each $u\in U$, the subwalk of $A_u$ on $[0,T]$ is closed (from $x_u$ back to $x_u$).
Collapsing it makes $A_u$ stay at $x_u$ for all times and saves exactly $2$ moves
(the moves $x_u\to p_u$ and $p_u\to x_u$).
For each edge-agent $B_{e_r}$ (block start $\Theta=\Theta_r$), there are two possible collapses:
\[
[\Theta,\Theta+4]\ \text{(endpoints $a_{e_r}$)} \qquad [\Theta+2,\Theta+6]\ \text{(endpoints $b_{e_r}$)}.
\]
Each saves exactly $4$ moves (the four edge-traversals inside the collapsed interval),
and they cannot both be applied since they overlap on $[\Theta+2,\Theta+4]$.
Moreover, any other collapse of $B_{e_r}$ yields zero saving: outside the block $B_{e_r}$
only waits at a fixed vertex, and inside the block the only repeated vertices are
$a_{e_r}$ (at times $\Theta,\Theta+4$) and $b_{e_r}$ (at times $\Theta+2,\Theta+6$).
We also note that for $e_r=(u,v)$, even if we do not collapse $A_u$ or $A_v$, we can still
apply one of the above collapses to $B_{e_r}$.

\smallskip
\noindent\emph{Bound.}
Let $C_0:=\mathrm{cost}(M)$ and set
\[
\beta := C_0 - 4m - 2K.
\]
Since each edge-agent $B_{e_r}$ can be collapsed once regardless of which vertex-agents are collapsed,
we can always obtain a total saving of $4m$ from the edge-agents.

\medskip
\noindent\textbf{Correctness}

\smallskip
\noindent(\,$\Rightarrow$\,)
Let $S\subseteq U$ be an independent set with $|S|\ge K$.
For each $u\in S$, collapse $A_u$ on $[0,T]$ so that $A_u$ stays at $x_u$ for all times.
For each edge $e_r=(u_r,v_r)$, modify $B_{e_r}$ as follows:
collapse $[\Theta_{r},\Theta_{r}+4]$ if $u_r\in S$; collapse $[\Theta_{r}+2,\Theta_{r}+6]$ if $v_r\in S$;
otherwise apply either one (arbitrarily).
Since $S$ is independent, at most one endpoint of $e_r$ lies in $S$, hence $B_{e_r}$ is
never required to apply both $[\Theta_{r},\Theta_{r}+4]$ and $[\Theta_{r}+2,\Theta_{r}+6]$.
After these collapses, $B_{e_r}$ avoids the uniquely occupied endpoint (if any) in its block,
so no collisions arise. Therefore the resulting schedule $M'$ is feasible.
The total saving is at least $4m+2|S|\ge 4m+2K$, so
$\mathrm{cost}(M')\le C_0-(4m+2K)=\beta$.

\smallskip
\noindent(\,$\Leftarrow$\,)
Suppose there exists a feasible $M'$ with $\mathrm{cost}(M')\le \beta$.
Let
\[
S:=\{u\in U \mid A_u \text{ is collapsed on }[0,T]\}.
\]
Each edge-agent $B_{e_r}$ can save at most $4$ since $[\Theta_{r},\Theta_{r}+4]$ and
$[\Theta_{r}+2,\Theta_{r}+6]$ are mutually exclusive and no other collapse yields positive saving.
Thus, the total saving from all edge-agents is at most $4m$.
Because $\mathrm{cost}(M')\le \beta = C_0-4m-2K$, the vertex-agents together must contribute
saving at least $2K$, implying $|S|\ge K$.

It remains to show that $S$ is independent. Suppose for contradiction that some edge
$e_r=(u_r,v_r)\in F$ has $u_r,v_r\in S$. Then $A_{u_r}$ occupies $x_{u_r}$ at all times,
so $B_{e_r}$ must eliminate its visit to $x_{u_r}$ at time $\Theta_r+1$; the only
positive-saving collapse that changes time $\Theta_r+1$ is $[\Theta_{r},\Theta_{r}+4]$,
hence $[\Theta_{r},\Theta_{r}+4]$ must be applied. Similarly, since $A_{v_r}$ occupies
$x_{v_r}$ at all times, $B_{e_r}$ must eliminate its visit to $x_{v_r}$ at time $\Theta_r+5$,
which forces applying $[\Theta_{r}+2,\Theta_{r}+6]$.
This contradicts the fact that $[\Theta_{r},\Theta_{r}+4]$ and $[\Theta_{r}+2,\Theta_{r}+6]$
cannot both be applied. Hence $S$ is an independent set of size at least $K$.

\medskip
\noindent\textbf{NP-hardness Conclusion.}
Therefore, the graph $H$ has an independent set of size at least $K$ if and only if the constructed
instance admits a feasible solution of cost at most $\beta$.
The construction has size polynomial in $|U|+|F|$ and can be computed in polynomial time, so this is a
polynomial-time reduction. Together with \Cref{lem:collapsed_in_np}, it follows that
\textsc{Decision MAPF-Collapse} is NP-complete. NP-completeness of the decision version implies
NP-hardness of the optimization version.\qed
\end{proof}

\subsection{Judgelight}
\label{subsec:collapsed_mapf_ilp}
We refer to our whole proposed post-optimization process as \textbf{Judgelight}. Here we show how to formulate MAPF-Collapse as an \textbf{Integer Linear Programming} (ILP) problem and demonstrate how Judgelight can help MAPF solvers improve the quality of their final solutions.

Throughout, we assume the input schedule $M\in V^{N\times (T+1)}$ is collision-free, which is already given by the problem definition.
A \emph{collapse action} is a tuple $c=(i,a,b,x)$ where $i\in [N]$, $0\le a<b\le T$, $x=M[i,a]=M[i,b]$, and applying $c$
rewrites all locations of agent $i$ on timesteps $k\in[a,b]$ to $x$.

\paragraph{Candidate actions and weights.}
Let the set of all candidate actions be
\[
\mathcal{C}\;:=\;\bigl\{\,c=(i,a,b,x)\ \big|\ 0\le a<b\le T,\ x=M[i,a]=M[i,b]\,\bigr\}.
\]
For each $c=(i,a,b,x)\in\mathcal{C}$, define its weight
\[
w_c\;=\;\sum_{k=a}^{b-1}\mathbb{I}\bigl[M[i,k]\neq M[i,k+1]\bigr],
\]
i.e., the number of edge traversals on $[a,b]$ in the original schedule. Collapsing $[a,b]$ turns all those traversals into waits,
hence saves exactly $w_c$ cost.

\paragraph{Decision variables and objective function.}
For each candidate $c\in\mathcal{C}$, introduce a binary variable $y_c\in\{0,1\}$ indicating whether $c$ is applied. We maximize the total saving:
\[
\textbf{maximize}\quad \sum_{c\in\mathcal{C}} w_c\,y_c.
\]
Under the non-overlap constraints below, the savings contributed by selected actions on the same agent do not double-count, and we have
$\mathrm{cost}(M')=\mathrm{cost}(M)-\sum_{c\in\mathcal{C}} w_c\,y_c$. Therefore this objective is equivalent to minimizing
$\mathrm{cost}(M')$.

\paragraph{Constraints from relations between actions.}
Selecting actions is constrained by relations induced from $M$.

\smallskip
\noindent\emph{(1) Mutual exclusion within an agent.}
Within one agent, there may exist multiple candidate collapse actions. If two actions
$c=(i,a,b,x)$ and $c'=(i,a',b',x')$ have intersecting time ranges, i.e.,
\[
[a,b]\cap[a',b']\neq \emptyset,
\]
then we select at most one of them, since otherwise the endpoints of one action would be rewritten by the other action, making the action invalid.
This yields the constraint
\[
y_{c} + y_{c'} \leq 1.
\]

\smallskip
\noindent\emph{(2) Mutual exclusion across agents (action--action collisions).}
Two actions on different agents may force them to occupy the same vertex at overlapping timesteps.
For $c=(i,a,b,x)$ and $c'=(j,a',b',x')$ with $i\neq j$, if $x=x'$ and $[a,b]\cap[a',b']\neq\emptyset$,
then selecting both actions would create an unavoidable vertex collision. Hence we add the constraint
\[
y_c + y_{c'} \le 1
\]

\begin{algorithm}[t]
\caption{Build ILP relations from $M$}
\label{alg:find_relation_short}
\small
\begin{algorithmic}[1]
\Require Feasible schedule $M\in V^{N\times(T+1)}$
\Ensure Candidate actions $\mathcal{C}$ with weights $\{w_c\}$; within-agent exclusions $\mathcal{E}_{\mathrm{in}}$;
cross-agent exclusions $\mathcal{E}_{\mathrm{cross}}$; dependencies $\mathcal{D}$; invalid actions $\mathcal{I}$

\State Build $\mathrm{Occ}[k][x]:=\{j\in[N]\mid M[j,k]=x\}$ for all $k\in[0..T], x\in V$
\State For each agent $i$, precompute prefix move counts $\mathrm{Pref}_i$ where $\mathrm{Pref}_i[0]=0$, 
$\mathrm{Pref}_i[t+1]=\mathrm{Pref}_i[t]+\mathbb{I}[M[i,t]\neq M[i,t+1]]$

\State $\mathcal{C}\gets \emptyset$
\For{$i=1$ to $N$}
  \State Build timestep lists $\mathrm{Times}_i[x]:=\{k\mid M[i,k]=x\}$ for visited vertices $x$
  \ForAll{$x$ with $\mathrm{Times}_i[x]\neq\emptyset$}
    \ForAll{$(a,b)$ such that $a,b\in\mathrm{Times}_i[x]$ and $a<b$}
      \State Add $c=(i,a,b,x)$ to $\mathcal{C}$ with $w_c=\mathrm{Pref}_i[b]-\mathrm{Pref}_i[a]$
    \EndFor
  \EndFor
\EndFor

\State $\mathcal{E}_{\mathrm{in}}\gets$ all pairs of $c,c'\in\mathcal{C}$ on the same agent with $[a,b]\cap[a',b']\neq\emptyset$
\State $\mathcal{E}_{\mathrm{cross}}\gets$ all pairs of $c,c'\in\mathcal{C}$ on different agents with $x=x'$ and $[a,b]\cap[a',b']\neq\emptyset$

\State Build an Segment Tree $\mathsf{Idx}_j$ for each agent $j$ over its actions in $\mathcal{C}$, which allows us to efficiently retrieve all actions related to a given position.
\State $\mathcal{D}\gets \emptyset,\ \mathcal{I}\gets \emptyset$
\ForAll{$c=(i,a,b,x)\in\mathcal{C}$}
  \For{$k=a$ to $b$}
    \ForAll{$j\in \mathrm{Occ}[k][x]$ with $j\neq i$}
      \State $\mathcal{S}\gets \{c'=(j,a',b',x')\in \mathsf{Idx}_j.\mathrm{Query}(k)\mid x'\neq x\}$
      \If{$\mathcal{S}=\emptyset$} \State $\mathcal{I}\gets \mathcal{I}\cup\{c\}$ \State \textbf{break $k$ and $j$ loops} \EndIf
      \State $\mathcal{D}\gets \mathcal{D}\cup\{(c,\mathcal{S})\}$ \Comment{$y_c\le \sum_{c'\in\mathcal{S}} y_{c'}$}
    \EndFor
  \EndFor
\EndFor

\State \Return $(\mathcal{C},\{w_c\},\mathcal{E}_{\mathrm{in}},\mathcal{E}_{\mathrm{cross}},\mathcal{D},\mathcal{I})$
\end{algorithmic}
\end{algorithm}

\smallskip
\noindent\emph{(3) Dependency across agents (collision avoidance).}
A collapse may force an agent to occupy a vertex $x$ over a time interval; to keep the schedule feasible, other agents that would occupy the
same vertex at those timesteps must be ``moved away'' by selecting some collapse action on those agents.
For example, if we have two agents $\{1, 2\}$ and the paths are $(A, B, A)$ and $(C, A, C)$, then there are two collapse actions
$c=(1, 0, 2, A)$ and $c'=(2, 0, 2, C)$, and selecting $c$ requires that agent $2$ also selects a suitable action so that collisions are avoided.

Formally, for each selected action $c=(i,a,b,x)$, for every other agent $j\neq i$, for all $k\in[a,b]$ such that $M[j,k]=x$,
define the set of \emph{suitable} actions on agent $j$:
\[
\mathcal{C}(j,k,x)
\;:=\;
\bigl\{
c'=(j,a',b',x')\in\mathcal{C}
\ \big|\
k\in[a',b']\ \text{and}\ x'\neq x
\bigr\}.
\]
Then collision avoidance is enforced by the implication
\[
y_c \le \sum_{c'\in \mathcal{C}(j,k,x)} y_{c'} 
\]
\[
\forall\, c=(i,a,b,x)\in\mathcal{C},\ \forall\, j\neq i,\ \forall\, k\in[a,b]\ \text{with}\ M[j,k]=x
\]


Therefore, the remaining task is to systematically extract (i) mutual-exclusion relations,
(ii) dependency relations, and (iii) invalid actions from the input schedule $M$. We summarize the constraint construction procedure in \Cref{alg:find_relation_short}. We also note that Judgelight can be applied to any MAPF algorithm, not only learning-based methods, although its benefits are most prominent over learning-based solvers due to the inherent uncertainty of neural network policies. Moreover, in theory, Judgelight can be extended to other MAPF variants, such as those with rotation actions as well as TAPF and LMAPF, which are more representative of real-world warehouse scenarios.

In practice, when forming the ILP, we apply two essential optimizations:
\begin{enumerate}
\item For any path segment of the form ``AA...AA'', in principle every pair of identical
locations could be used to form a collapse action. However, all such actions are
equivalent in terms of both cost savings and interactions with other agents. Therefore,
we keep only the collapse defined by the first and last occurrences of ``A'' when
constructing ILP variables. For example, for a path ``AAABAAA'', there are originally
$\binom{6}{2}$ possible collapse actions, but after this optimization, only
$\binom{4}{2}$ actions remain. As a result, the number of ILP variables and constraints
is significantly reduced without affecting the optimality of \textsc{MAPF-Collapse}.

  \item For oscillatory segments such as ``ABABABA'', we first apply a length-3 filter
  that rewrites every subsequence of the form ``ABA'' into ``AAA'' whenever this
transformation can be applied without introducing collisions. This optimization is
  necessary because, in practice, frequent back-and-forth motion can cause the number of
  ILP variables to become extremely large. For example, an alternating sequence
  ``ABABAB\ldots'' of length $n$ can generate $O(n^2)$ variables and $O(n^4)$ constraints,
  and $n$ can be on the order of 100 depending on the map size. This preprocessing step
  can affect the optimality of the original problem, but it is necessary in practice.
\end{enumerate}

\begin{figure}[tb!]
    \centering
    \begin{subfigure}[b]{0.25\textwidth}
        \centering
        \includegraphics[width=\textwidth]{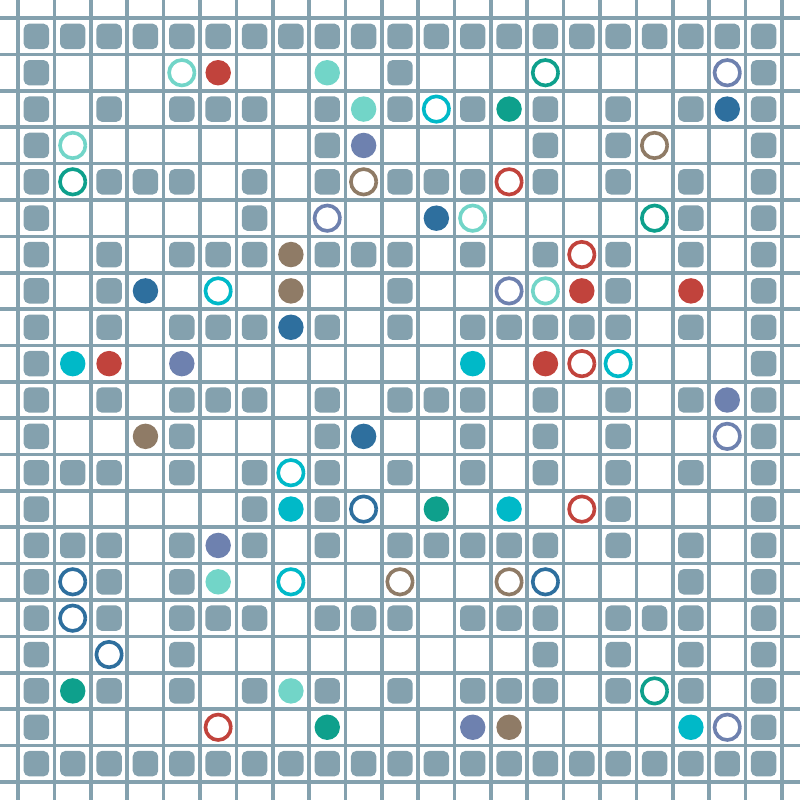}
        \caption{Maze}
        \label{fig:maze}
    \end{subfigure}
    \begin{subfigure}[b]{0.25\textwidth}
        \centering
        \includegraphics[width=\textwidth]{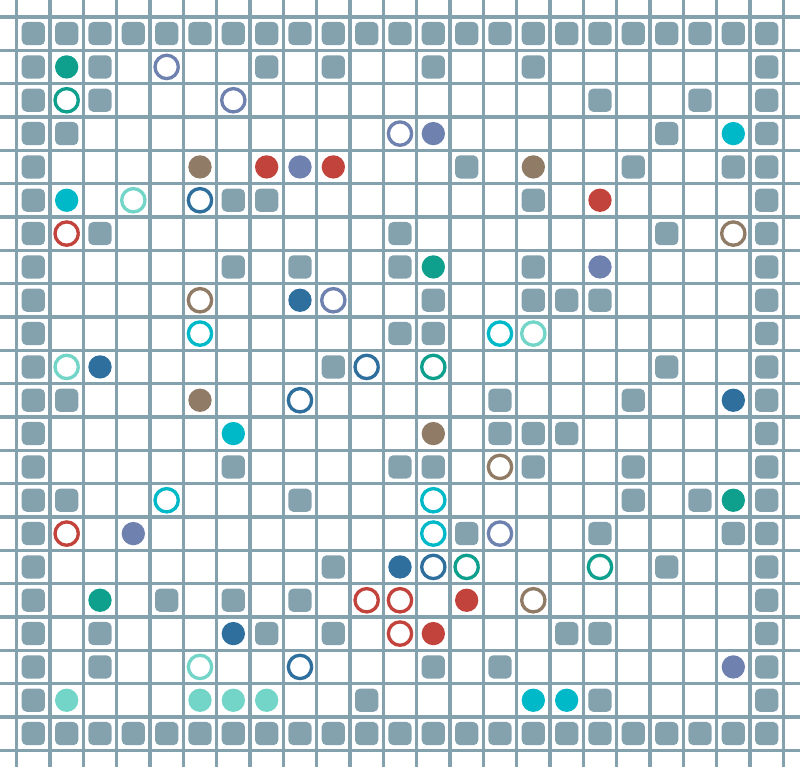}
        \caption{Random}
        \label{fig:random}
    \end{subfigure}
    \begin{subfigure}[b]{0.25\textwidth}
        \centering
        \includegraphics[width=\textwidth]{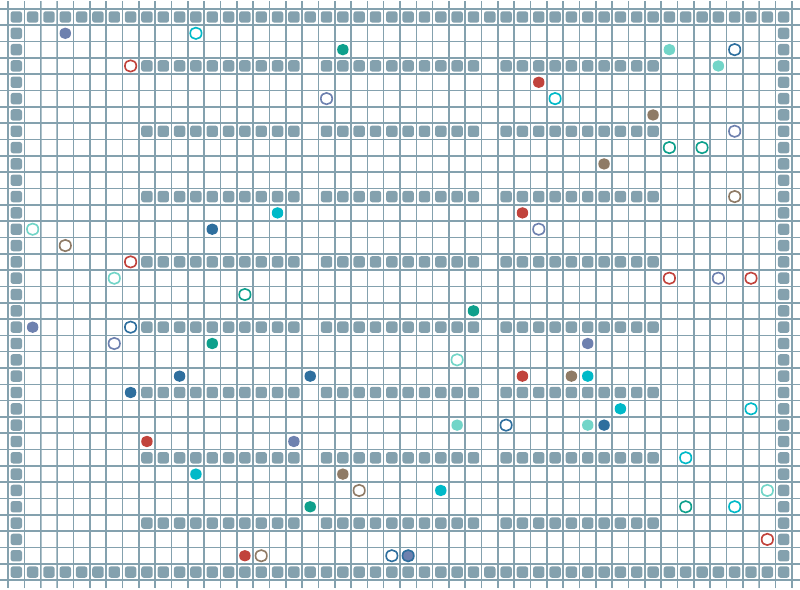}
        \caption{Warehouse}
        \label{fig:warehouse}
    \end{subfigure}
    
    \begin{subfigure}[b]{0.25\textwidth}
        \centering
        \includegraphics[width=\textwidth]{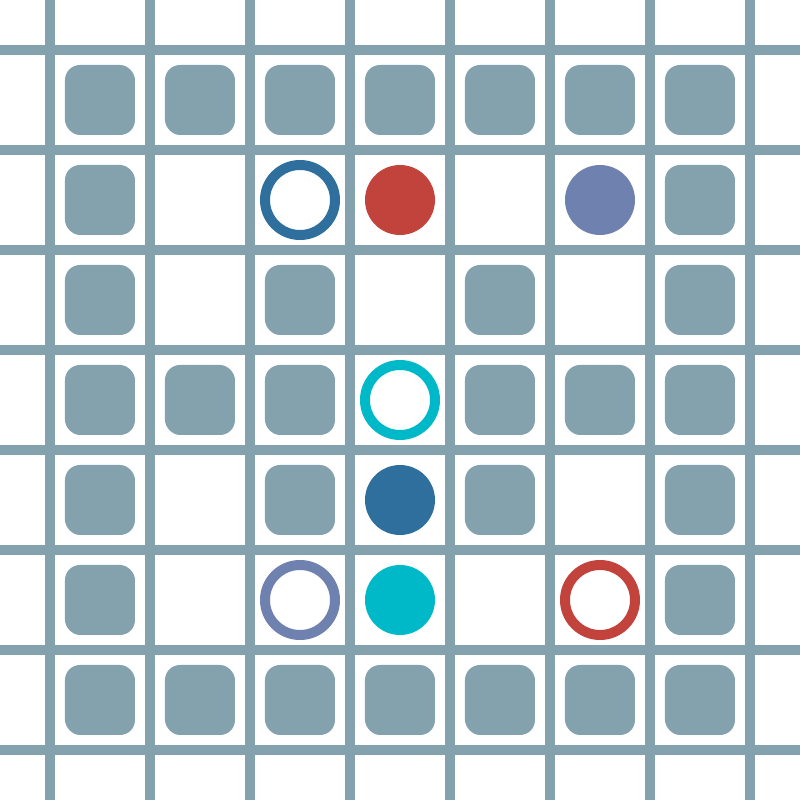}
        \caption{Puzzle}
        \label{fig:puzzle}
    \end{subfigure}
    \begin{subfigure}[b]{0.25\textwidth}
        \centering
        \includegraphics[width=\textwidth]{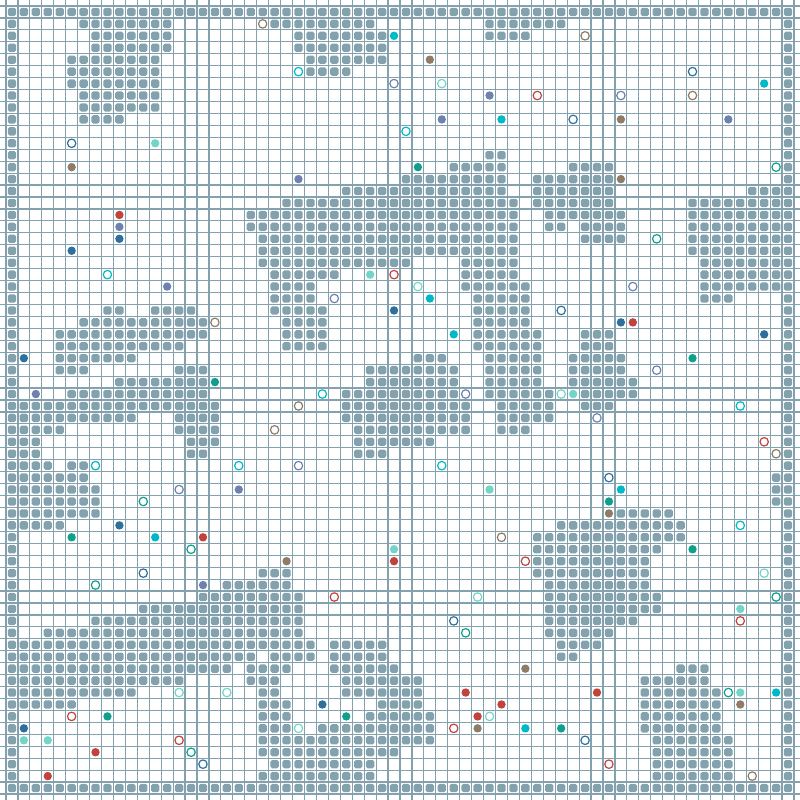}
        \caption{Cities-tiles}
        \label{fig:cities-tiles}
    \end{subfigure}
    \caption{Examples of POGEMA-tested maps. Maze and Random map sizes are 32x32. Warehouse map is 33x46. Puzzle map sizes are below 10x10. Cities-tiles are 64×64 areas selected from larger Cities maps.}
    \label{fig:maps_examples}
\end{figure}

\section{Experiments}

For experiments, we primarily evaluate our Judgelight post-optimization on learning-based methods. We compare solution quality before and after applying Judgelight, using solution cost as the main metric. We adopt the POGEMA benchmark~\cite{skrynnik2024pogemabenchmarkplatformcooperative} to evaluate learning-based methods with Judgelight.

All experiments\footnote{Our code will be released as open source after the camera-ready version.} were conducted on a system running Ubuntu~22.04.1~LTS, equipped with an Intel Core i9-12900K CPU, 128\,GB RAM, and an NVIDIA RTX~3080 GPU. We use the POGEMA benchmark with five different map types, as shown in \Cref{fig:maps_examples}, resulting in a total of 3{,}296 test cases. For the ILP solver, we use Gurobi~13.0. For the ILP solver, we impose a time limit of 5 seconds on solving, but do not limit the time spent on ILP construction.

\begin{figure*}[t!]
\centering
\includegraphics[width=0.9\textwidth]{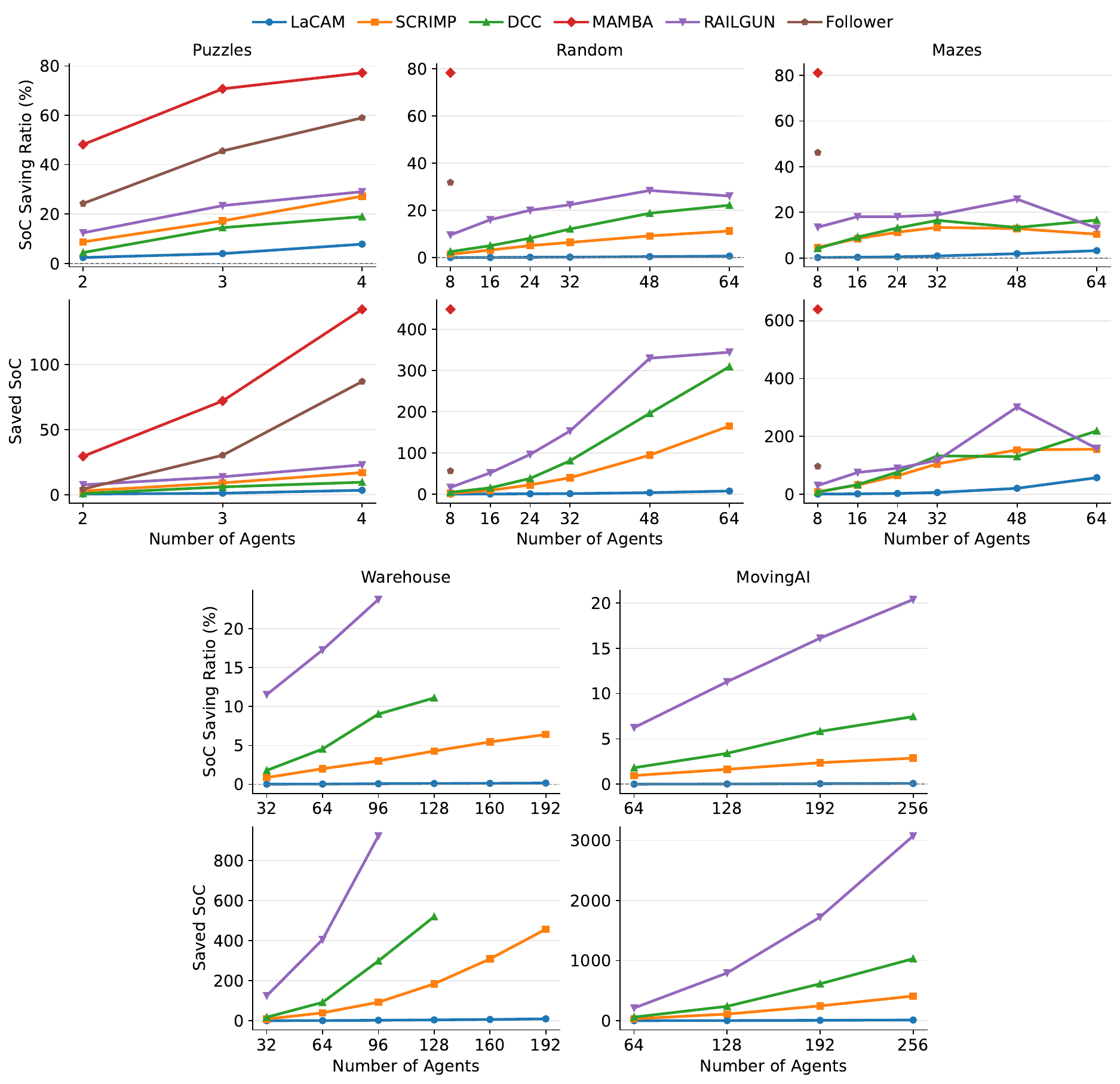}
\caption{This figure illustrates how Judgelight performance is affected by testcase difficulty and algorithm performance. We include only test cases for which the algorithm achieves $\mathrm{ISR}=1$, as Judgelight's cost savings are not meaningful when agents exhibit largely random motion (even though the apparent savings could be high in such cases). Missing data indicate that the algorithm has no suitable test cases under the corresponding experimental settings.}
\label{fig:1}
\end{figure*}

\begin{figure*}[t!]
\centering
\includegraphics[width=0.9\textwidth]{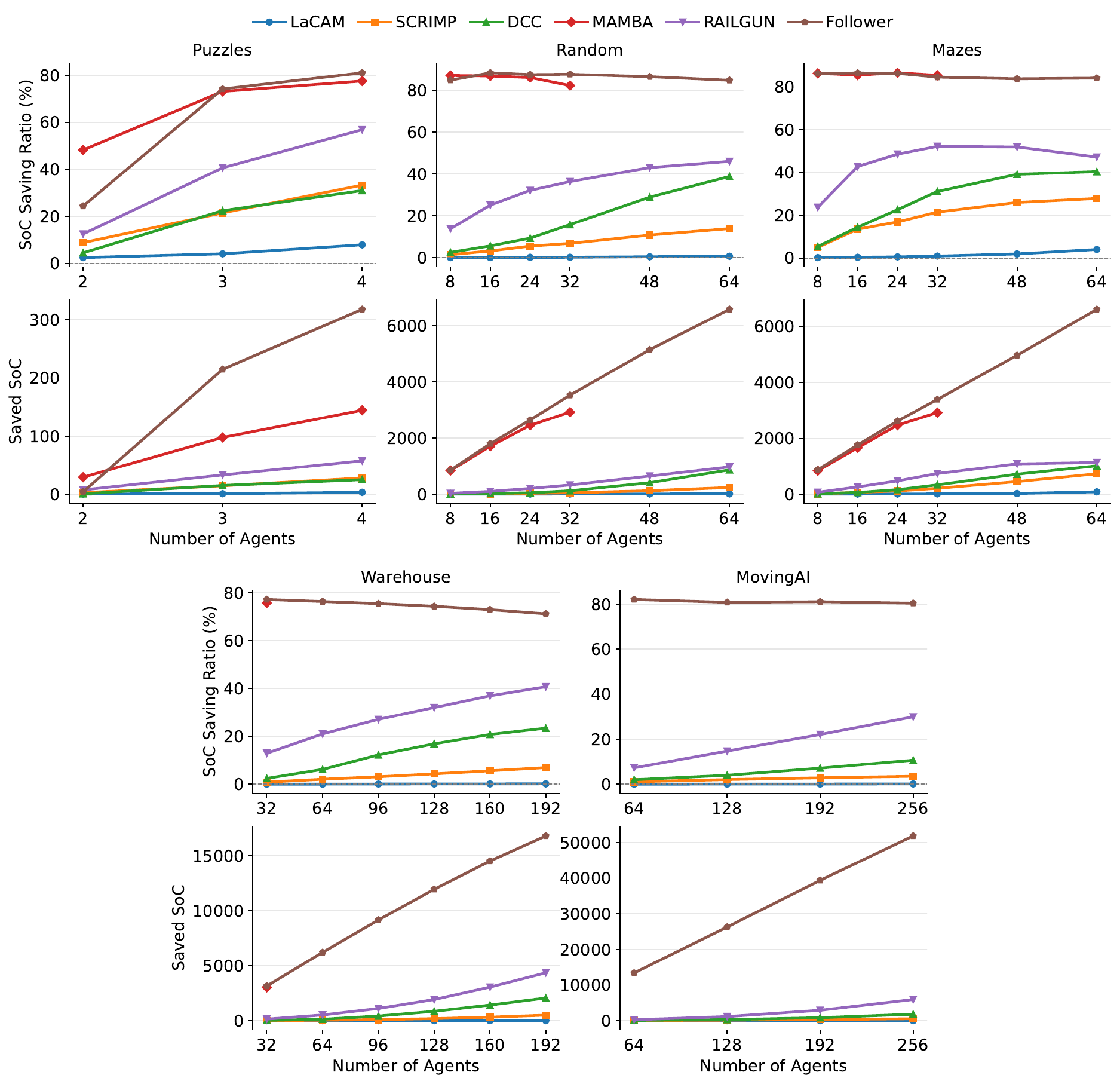}
\caption{In this figure, we include only test cases for which the algorithm achieves $\mathrm{ISR}>0.5$. This choice was made to provide more informative test cases. An $\mathrm{ISR}>0.5$ indicates that the learning-based policy is still partially effective on the test case. While such performance may not be sufficient for the MAPF task, it may still be useful for LMAPF.}
\label{fig:2}
\end{figure*}


In POGEMA, the learning-based methods include VDN~\cite{sunehag2018value}, QPLEX~\cite{wang2020qplex}, SCRIMP~\cite{wang2023scrimp}, IQL~\cite{tan1993multi}, QMIX~\cite{rashid2020monotonic}, DCC~\cite{ma2021learning}, MAMBA~\cite{egorov2022scalable}, Switcher~\cite{skrynnik2023switch}, Follower~\cite{skrynnik2024learn}, and MATS-LP~\cite{skrynnik2024decentralized}. However, based on the results reported in the RAILGUN paper~\cite{tang2025railgununifiedconvolutionalpolicy}, we evaluate SCRIMP, DCC, RAILGUN, MAMBA and Follower in this work, as they achieve good success rates on POGEMA. We also evaluate LaCAM as a representative suboptimal search-based method. 

Here we introduce several evaluation metrics. \textbf{ISR} (Individual Success Rate) is defined as the percentage of agents that eventually stop at their goal locations. \textbf{Saved SoC} denotes the reduction in SoC between the original solution and the solution obtained after applying Judgelight. \textbf{SoC Saving Ratio} is defined as Saved SoC divided by original solution SoC. \textbf{Agent density} is computed as the average, over all timesteps, of the number of agents within an agent's field of view (FOV) divided by the number of non-obstacle grid cells within that FOV. All FOV sizes use the same setting in POGEMA. The default FOV size is $11\times11$; for DCC, it is $9\times9$. For the centralized methods RAILGUN and LaCAM, we use the default FOV to compute agent density.

It is important to note that all tested methods are designed for the standard MAPF problem formulation. Since standard MAPF does not assign a lower cost to wait actions compared to move actions and learning-based policies are trained with standard MAPF data, the cost metric used in MAPF-Collapse is not perfectly aligned with the original objectives of some methods. As a result, the reported costs should be interpreted as illustrating the potential benefits that MAPF algorithms can gain from the Judgelight post-optimization under different scenarios, rather than as a direct comparison of their native optimization objectives.

\begin{figure*}[t!]
\centering
\includegraphics[width=0.9\textwidth]{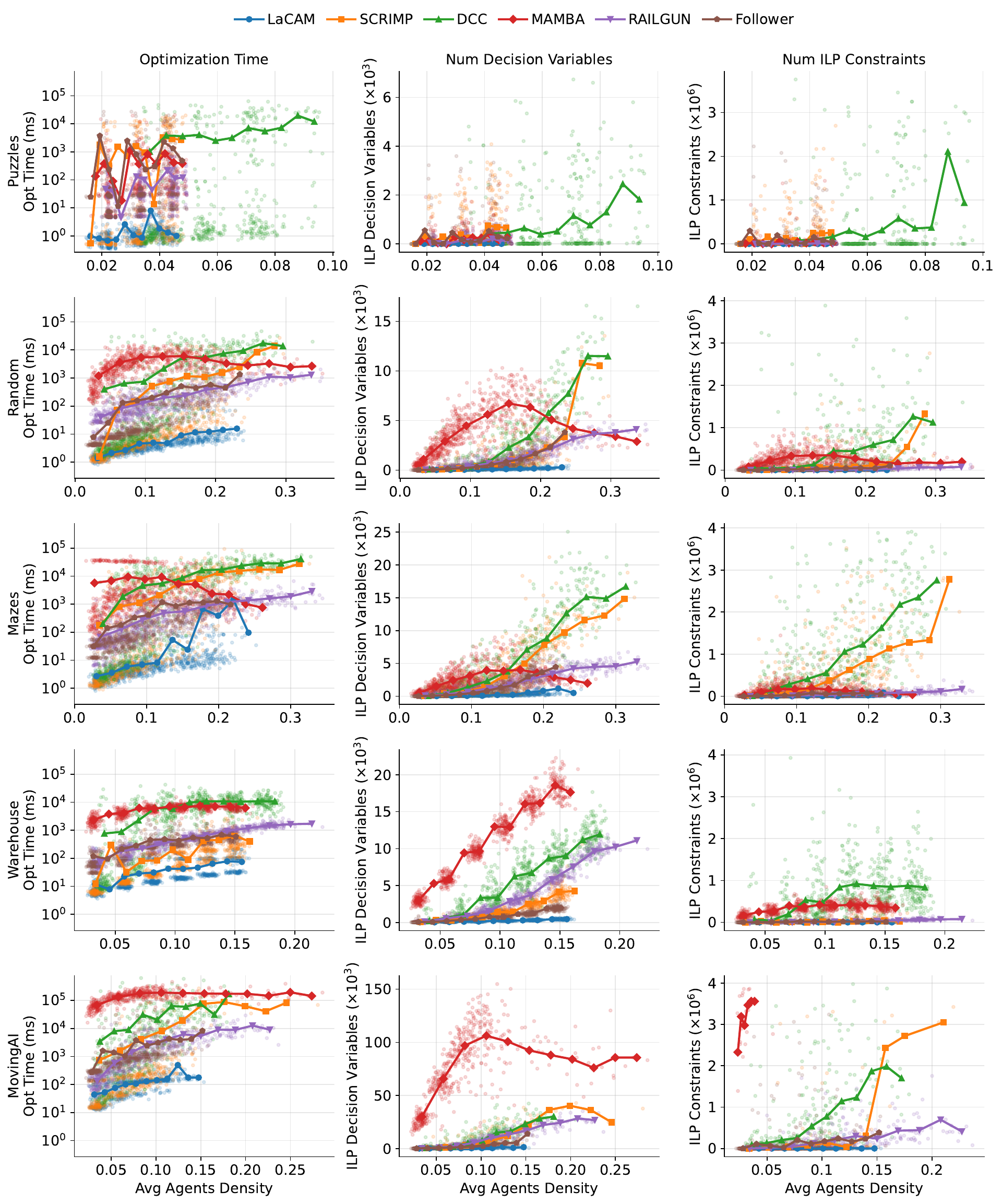} 
\caption{This figure shows the relations between agent density and ILP generated actions/constraints, and also agent density and Judgelight running time.This figure include all testcases even the method's ISR$=0$. In ILP constraints, we didn't show data points which larger than 4e6 in the figure for making all method can be distinguish.}
\label{fig:4}
\end{figure*}

In Figures~\ref{fig:1} and \ref{fig:2}, we report the total SoC savings achieved by Judgelight post-optimization for the MAPF problem. We observe a clear trend: as the number of agents increases, both the Saved SoC and the SoC saving ratio increase. For a fixed map type and map size, increasing the number of agents leads to higher agent density and a higher probability of path conflicts, making the test cases more difficult. As a result, suboptimal algorithms tend to explore multiple collision-avoidance behaviors. Since most learning-based methods predict only the next action, such exploratory behavior often manifests as oscillatory motion. After a collision-free solution is found, Judgelight can substantially reduce these redundant movements, achieving an average SoC reduction of approximately 20\%--40\%. As shown in \Cref{fig:2}, we also observe that MAMBA and Follower exhibit consistently high SoC saving ratios across different map types. This is because these methods often fail to fully solve the MAPF problem; for agents that do not reach their goal locations, the policies tend to produce oscillatory behavior, causing unnecessary movements to constitute a large fraction of the overall solution cost. This observation indicates that Judgelight can also be beneficial in partially solved scenarios by encouraging agents that cannot reach their goals under the current policy to remain stationary, rather than executing unsafe and unnecessary actions.

\Cref{fig:4} focuses on the runtime performance of Judgelight. In real-world warehouse operations or warehouse competition settings~\cite{chan2024league_robot_runners}, real-time planning is typically required, making Judgelight's runtime performance critical for practical deployment. As shown in \Cref{fig:4}, Judgelight's runtime is mainly influenced by two factors: agent density and algorithm performance. Higher agent density corresponds to more difficult test cases, and in such scenarios learning-based methods are more likely to generate unnecessary movements. This leads to a larger number of ILP variables and constraints, increasing both ILP construction time and solver time. We also observe that the number of ILP constraints has a greater impact on runtime than the number of variables. For example, in the warehouse scenario, although RAILGUN and DCC generate a similar number of ILP variables, DCC consistently exhibits longer runtimes because it produces more ILP constraints. Algorithm performance also strongly affects the size of the ILP, as shown in \Cref{tab:runtime_1s}. LaCAM, which has strong MAPF-solving capability, generates relatively few ILP actions and constraints, resulting in runtimes that are consistently below 100ms. For learning-based methods such as SCRIMP, DCC, and RAILGUN, which achieve relatively good performance, Judgelight's runtime is also below one second in most test cases.

\begin{table}[t!]
\centering
\begin{tabular}{l@{\hspace{10mm}}c@{\hspace{10mm}}c}
\toprule
 & $\mathrm{ISR}\geq 0$ & $\mathrm{ISR}=1$ \\
\midrule
LaCAM     & 3286 / 3296 (99.70\%) & 3266 / 3270 (99.88\%) \\
SCRIMP   & 2783 / 3296 (84.44\%) & 2657 / 2699 (98.44\%) \\
DCC      & 1724 / 3296 (52.31\%) & 1600 / 1799 (88.94\%) \\
MAMBA    & 880 / 3296  (26.70\%) & 226 / 230  (98.26\%) \\
RAILGUN  & 2699 / 3296 (81.89\%) & 1036 / 1041 (99.52\%) \\
Follower & 2910 / 3296 (88.29\%) & 110 / 110  (100.00\%) \\
\bottomrule
\end{tabular}
\caption{Judgelight runtime statistics measured by the proportion of test cases whose
optimization time is within 1 second.}
\label{tab:runtime_1s}
\end{table}
\vspace{-6mm}

\section{Conclusion}

Learning-based MAPF solvers often produce feasible solutions that exhibit unnecessary and oscillatory movements, especially under congestion or partial failures, due to their next-action–level decision making. We formalize the removal of such redundant motion as a trajectory-level post-processing problem, which we call \textbf{MAPF-Collapse}, prove that it is NP-hard, and develop an exact solution via an integer linear programming formulation. We refer to the resulting post-optimization framework as \textbf{Judgelight}. Experiments on the POGEMA benchmark show that Judgelight consistently reduces solution cost by around 20\%--40\% while remaining practical in runtime, with over 90\% of solved MAPF test cases finishing within one second. These results demonstrate that Judgelight is an effective, solver-agnostic post-processing step for improving MAPF trajectory quality in real-world settings. As future work, Judgelight could be extended to other multi-agent planning settings, and enhanced to eliminate not only redundant movements but also unnecessary waiting actions.

%
%
%

\newpage

\bibliographystyle{splncs04}
\bibliography{strings,myref}

\end{document}